\newtheorem{theorem}{Theorem}
\newtheorem{definition}{Definition}
\newtheorem{lemma}{Lemma}
\title{Prompt Valuation based on Shapley Values}
\author{%
  Hanxi Liu \\
  Zhejiang University\\
  \texttt{liuhx01@zju.edu.cn} \\
  % examples of more authors
  \And
  Xiaokai Mao \\
  Zhejiang University \\
  % Address \\
  \texttt{xiaokaimao@zju.edu.cn} \\
  \And
  Haocheng Xia \\
  % Affiliation \\
  Zhejiang University \\
  \texttt{xiahc@zju.edu.cn} \\
  \And
  Jinfei Liu \\
  % Affiliation \\
  Zhejiang University \\
  \texttt{jinfeiliu@zju.edu.cn} \\
  \And
  Jian Lou \\
  % Affiliation \\
  Zhejiang University \\
  \texttt{jian.lou@zju.edu.cn} \\
  \And
  Kui Ren \\
  Zhejiang University \\
  \texttt{kuiren@zju.edu.cn}
}
\begin{document}

\maketitle

\begin{abstract}
Pre-trained language models exploit natural language prompts to solve new tasks without any parameter updates. 
% However, designing the optimal prompt is challenging.
% by demonstrating how the task should be performed. Multi-prompt learning comprehensively harnesses the knowledge of language models while mitigating individual biases and errors to further enhance performance.
Multi-prompt learning utilizes multiple prompts to perform a task, thus introducing diversity and reducing biases.
However, even with carefully selected prompts, only a part of them have a positive effect while some prompts can impair performance, meaning fewer prompts may achieve higher performance. Existing methods only initially select a fixed number of prompts, which cannot further refine the selection result or even determine how many to refine.
% Most of the existing research is based on a fixed number of prompts, but the low-value prompts among them can impair performance.
% To solve these problems, we need to .
Can we find a value measurement for prompts to distinguish the value of each prompt to enhance task performance?
In this paper, we utilize the Shapley value, which uniquely satisfies four fairness properties, to quantify the value of prompts equitably.
% The Shapley value and is generally adaptable to prompts and models without any additional components.
Furthermore, to reduce the \#P-hard complexity of accurately calculating Shapley values, we propose an efficient learning-based method to predict Shapley values for prompt ensembling towards real-time prompt valuation.
%prompt ensembling to quickly calculate the Shapley values of new prompt sets, 
% thereby potentially guiding profit distribution in larger-scale prompt markets.
Through experiments on sentiment analysis, arithmetic reasoning, and commonsense reasoning with BERT and GPT-3.5-turbo, we validate the effectiveness of the Shapley value in prompt valuation, as well as the efficiency of using the learning-based Shapley value estimation method.
\end{abstract}

\section{Introduction}\label{section:introduction}
% PLM
Pre-trained language models~\cite{gpt3,palm,bert,gpt4,lamda,llama,llama2} have demonstrated remarkable success across a range of natural language processing tasks including text classification~\cite{DBLP:conf/acl/GaoFC20}, question answering~\cite{DBLP:journals/corr/abs-2105-05541}, and arithmetic reasoning~\cite{DBLP:conf/acl/ImaniD023}. 
% prompt, new paradigm
As an alternative paradigm to fine-tuning, prompt learning offers an off-the-shelf way to perform downstream tasks~\cite{DBLP:journals/ijcv/ZhouYLL22}, eliminating the need to train and store parameters separately for each task which is costly~\cite{finetune}. 
% how it works                                                          
% By concatenating extra text with the original input, we can guide the language model to understand the task or how to perform it, thereby achieving significant results on downstream tasks.

% single prompt limitations
While naive single prompt learning is promising and has indeed achieved success,
% there still exist two limitations. 
% the quality of the prompt has a significant effect on the final performance, 
it needs elaborate methods or manual efforts of domain experts to find the most suitable prompt for specific tasks. 
% 1. method limit
Automatic searching methods normally require gradient information~\cite{autoprompt,optiprompt} or training additional parameters~\cite{profixtuning}, which are 
% only suitable for smaller language models and 
impossible for black-box access settings and display significantly high levels of bias~\cite{DBLP:journals/corr/abs-2403-09963}. 
% 2. manually limit
At the same time, language models are highly sensitive to the prompt due to the instability, which means manually crafted prompts may achieve sub-optimal performance, even leading to low performance~\cite{HowCanWeKnow,rationaleaugment}.
% Therefore, designing prompts can be challenging and indirectly increase the cost of the prompting engineering.

% multi-prompt learning motivation
To overcome the shortcomings of a single prompt, a basic view is that the task performance can be significantly improved if sufficient diversity is introduced~\cite{rationaleaugment}. Using multiple prompts can aggregate across diverse results to reduce bias towards specific labels~\cite{optiprompt}, overcome the brittleness of performance to sub-optimal prompts, and further improve the effectiveness of prompting methods without training additional parameters. This pattern is called multi-prompt learning~\cite{promptsurvey}. 
% only ensemble and augmentation
In this paper, we focus on two widely used ways to extend single-prompt learning to multi-prompt learning. One is prompt ensembling which aggregates multiple outputs~\cite{voting1,voting2,sc}. It can be used for prompts with different formats, tasks, and language model families. The other is prompt augmentation, a generally acknowledged way to improve the reasoning capabilities of large language models, which provides a few additional examples that can take advantage of the ability of language models to learn repetitive patterns~\cite{gpt3,wei22cot}.

% pros and cons
\paragraph{Motivation} Using multiple high-value prompts proves beneficial as it enables models to capture various information, enhancing their ability to handle diverse inputs~\cite{HowCanWeKnow}. 
% As a result, it produces trustworthy and comprehensive outputs.
In contrast, using multiple low-value prompts may lead to limited or even negative effects. Such prompts struggle to address the shortcomings of single prompts. 
The prompts generated or selected by~\cite{activecot,complexcot,wei22cot,autocot} may include both two types of prompts. By further refining these prompts, it is possible to use only a part of them to achieve higher task performance. \emph{Can we find a value measurement for prompts to distinguish the value of each prompt to enhance task performance?}
% We can achieve this by distinguishing the value of each prompt in improving performance.

% and may introduce misleading information, potentially causing biases and errors in the model's outputs. 
%
% thus

% Thus, determining \emph{how to quantify the value of prompts in multi-prompt learning equitably} is essential. This provides us with insights into which prompts have a more significant impact on the model's performance and output quality, thereby guiding us in designing and optimizing prompt combinations more effectively. By understanding the relative contributions of each prompt, we can tailor prompt combinations more precisely, ensuring that only the most crucial prompts get emphasized.
% market
% At the same time, considering that prompts have become commodities in the data market and now there are some prompt markets~\cite{pmtai,pmtbase,pmtrr,pmtwink}, quantifying the value of prompts will guide more reasonable profit distribution. 
% unfortunately
% Unfortunately, there exists no research on this topic, suggesting a need to explore a method for equitable prompt valuation.

\paragraph{Contributions} In this paper, we adopt the Shapley value which uniquely satisfies four desired fairness properties, to quantify the value of prompts in multi-prompt learning equitably.
Our research demonstrates the effectiveness of this method, as it effectively distinguishes and evaluates the contribution of each prompt.
Using the Shapley value to value prompts does not require any fine-tuning~\cite{star}, verifiers~\cite{verify}, calibrators~\cite{calibrate}, or additional datasets~\cite{DBLP:journals/corr/abs-2206-02336}. It is unspecific to language models and prompt types. When calculating the Shapley value, the coalition and utility function can be freely defined, therefore it is suitable for prompt valuation.

Furthermore, to reduce the complexity of calculating the Shapley value, which is \#P-hard~\cite{DBLP:journals/mor/DengP94}, we propose a learning-based method for real-time prompt valuation for prompt ensembling with smaller masked language models that require more prompts to enhance performance while excelling at end-to-end tasks. Based on the viewpoint that the semantic scope of prompts for a specific task is limited, Shapley values of prompts for one task can be learned and predicted by calculating Shapley values on existing prompts in the specific task and training a regression predictor. We show that the Shapley value can be learned for certain tasks. 
% Therefore, the complex calculation problem of Shapley values can be alleviated and the scale of the prompt market can be expanded under limited budgets.

% SV
The Shapley value~\cite{shapley1953value} measures each player's contribution to the cooperative game theory.
% It has been proven as the unique way that satisfies four desired properties for contribution allocation including balance, symmetry, additivity, and zero element. 
It has been widely used in machine learning model explanation~\cite{SHAP}, feature selection~\cite{featselect}, and data pricing~\cite{dealer}.
% Specifically, balance requires that the entire reward is allocated among all participants. Symmetry specifies that players with identical marginal contributions receive equal rewards. Additivity dictates that the combined reward for two tasks should equal the sum of their individual task values. Zero element implies that a player without any marginal contribution should not receive any reward. 
The Shapley value of one player is the weighted average of all its marginal contributions, which is the utility differences between player sets with and without the player. The Shapley value captures all possible cooperation scenarios of each player and hence can distinguish valuable players from worthless players.

The valuation results can provide us with insights into which prompts have a more significant impact on task performance and output quality, thereby guiding us in designing and optimizing prompt combinations more effectively. By understanding the relative contributions of each prompt, we can tailor prompt combinations more precisely, ensuring that only the most crucial prompts get emphasized.
% market
At the same time, considering that prompts have become commodities in the data market and now there are some prompt markets~\cite{pmtai,pmtbase,pmtrr,pmtwink}, quantifying the value of prompts will guide more reasonable profit distribution. 

% Given various natural language process tasks, we demonstrate the versatility of the Shapley value method through a series of experiments. 
The experimental results demonstrate the effectiveness and efficiency of our methods. On the one hand, our valuation method can quantify the value of each prompt equitably so that the task performance is enhanced with fewer prompts. On the other hand, our learning-based method can greatly reduce the computation time of Shapley values, while maintaining accuracy. We briefly summarize our contributions as follows.

\begin{itemize}
    \item We incorporate the Shapley value in multi-prompt learning for equitable prompt valuation to identify valuable or worthless prompts and find the prompt combination with fewer but more performant prompts.
    % Our method is general and can be used in most cases.
   
    \item We propose a learning-based method to predict Shapley values of prompts in real time for prompt ensembling and prove Shapley values of prompts for a specific task are learnable.

    \item Experiments on sentiment analysis, arithmetic reasoning, and commonsense reasoning with BERT and GPT-3.5-turbo demonstrate the effectiveness of our valuation method and the efficiency of our learning-based method.
\end{itemize}

\section{Related works}\label{section:relatedworks}
Prompts can be divided into discrete prompts and continuous prompts based on whether they are natural language. The latter often requires additional training and is not suitable for large language models, so we focus on the former in this paper. 
% Prompts play a role in three ways mainly. 
Discrete prompts play a role by concatenating with the original input to form a new sentence. For traditional discrete prompts, there is a slot for predicting the answer, and depending on the slot's position in the new sentence, it can be divided into cloze prompts and prefix prompts. The methods for searching the discrete prompts typically require a large text corpus~\cite{HowCanWeKnow}, gradient information~\cite{autoprompt}, or additional pre-trained language models~\cite{DBLP:journals/tacl/Ben-DavidOR22}. 
% \textbf{Continuous prompts} perform prompting directly in the embedding space of the model. It removes the constraints that the prompts are predefined and must be in natural language. It can be updated arbitrarily during learning such that the final embedding often does not correspond to any real word in the vocabulary~\cite{profixtuning, powerofscale}. Therefore, it can be used for multimodal models~\cite{vitmemory, multifewshot}.
Few-shot prompting is another form of discrete prompting that only performs well on large language models. It prepends a few examples to the input so that the language model can learn and generalize repetitive patterns. The models can quickly adapt to novel tasks without fine-tuning or additional training by conditioning the model on a few examples. The examples of standard few-shot prompts typically consist of questions and answers~\cite{gpt3}. To further utilize the emergent abilities of large models to perform complex reasoning tasks, rationales are added into the examples~\cite{wei22cot}. This kind of prompt is called chain-of-thought (CoT).
% Currently, in-context learning only performs well on large language models.

The key of multi-prompt learning is to choose the right set of prompts. For discrete prompts, it is essential to design a series of templates, a process that used to be done manually. Now it can be done with automatic generation tools. For examples used for few-shot prompting, there are various research lines to improve the example selection. Such as similarity-based prompt selection which calculates the similarity between examples and retrieves the representative examples as the prompt~\cite{DBLP:conf/naacl/RubinHB22,DBLP:conf/iclr/SuKWSWX0OZS023,autocot} and probability-based prompt selection methods~\cite{probselect}.
Our method based on Shapley values is an extension of the existing methods, used to further refine their selection results.

Several benchmarks~\cite{chatbotarena,cothub,helm,bigbench,glue} can evaluate the prompts indirectly. Specifically, ~\cite{bigbench,glue} are basic large model benchmarks. ~\cite{cothub} focuses on evaluating reasoning while ~\cite{helm} evaluates a significantly wider spectrum of tasks. ~\cite{chatbotarena} evaluate the dialog user preference.  From another perspective, ~\cite{eval} proposes an evaluation suite that aggregates metrics across models to evaluate prompts over a diverse set of tasks and models, facilitating objective comparison. Our method can evaluate prompts on most models and tasks while obtaining a performant prompt combination. At the same time, as a widely used attribution method, The Shapley value can consider the interaction between prompts for equitable evaluation, improve interpretability, and glean additional insights.

\section{Shapley values for prompts}\label{sec:prop}

To equitably value the prompts, we introduce the Shapley value to quantify the value of prompts.
We first overview the relevant concept and define symbols in Section \ref{subsec:pre}. 
Then we briefly introduce how prompt and multiple prompts work in Section \ref{subsec:multi}.
Lastly, we propose the Shapley value in multi-prompt learning in Section \ref{subsec:PSV}. 

\subsection{Preliminaries}\label{subsec:pre}
Suppose we have a pre-trained language model $\mathcal{M}$.
% which only allows black-box queries and cannot access gradients, embedding of per layer, or other specific information. 
Note that we primarily focus on masked language models represented by the BERT family and auto-regressive language models represented by the GPT family. These two types of models are typically employed for inference (understanding) and reasoning (generation) tasks, respectively. When referring to these tasks in the following text, we default to using the corresponding type of language model.

To guide the model towards desired outputs, we have a set of prompts $\mathcal{P}=\{p_1,\ldots,p_n\}$ which are crafted manually or generated using automated methods and in the form of templates or exemplars.
A natural way to quantify the contribution of prompts is evaluating their performance and obtaining a score (e.g., accuracy, $F_1$ score) on a validation set $\mathcal{V}=\{\bm{d}_1, \ldots, \bm{d}_l\}$ and for each instance $\bm{d}_k = (x_k, y_k)$ in $\mathcal{V}$, $x_k$ are sentences to be understood in inference tasks or questions in reasoning tasks while $y_k$ is the ground-truth answer. 
We denote the performance (utility function) as $\mathcal{U}(\cdot)$ and define it as follows.
\begin{equation}\label{equ:sv}
\mathcal{U}(\mathcal{S}) = \frac{1}{|\mathcal{V}|} \sum_{\bm{d}_k \in \mathcal{V}} \mathcal{I}(y_k, \mathcal{M}([x_k, \mathcal{S}])),
\end{equation}
where $\mathcal{S} (\mathcal{S} \subseteq \mathcal{P})$ is the prompt coalition
% and depending on the ensemble method, $\lvert\mathcal{S}\rvert$ could be equal to 1 or greater than 1
. $[x_k, \mathcal{S}]$ represents the inputs for querying $\mathcal{M}$, obtained by concatenating $x_k$ with all $p$ in $\mathcal{S}$. $\mathcal{I}(\cdot)$ is a discriminant function that returns a value based on the comparison between the output of $\mathcal{M}$ and the ground truth. 
% Since we focus on inference and reasoning tasks, $\mathcal{I}(\cdot)$ typically returns a binary value, which means $\mathcal{U}(\cdot)$ returns accuracy. 
% Based on the utility function, we can measure the expectation of marginal contribution by $p_i$ in all possible coalitions over $\mathcal{P}$ as Shapley value.

\subsection{Multi-prompt learning}\label{subsec:multi}
We first briefly recall how prompts work in inference and reasoning tasks. For the former, we take the sentiment analysis task as an example. Consider an original input $x$ (e.g., ``I like this film.'') and we need to analyze its sentiment (positive or negative), we first construct an input $\hat{x}$ by concatenating it with a prompt $p$ (e.g., ``The sentiment is \texttt{[MASK]}.'') so that the model can know what it needs to do. Then we query the pre-trained language model to predict the word that could potentially appear at \texttt{[MASK]}. Lastly, we determine the sentiment conveyed by this word and assign it a positive or negative label. For the latter, given an example question $x$ (e.g., ``[5 + ? × 19 - 15 - 7]/[13 × 13 - 156] = 6''), we place similar examples that can demonstrate how to solve problems as the prompts in front of $x$ to elicit reasoning and then get a new input $\hat{x}$ so that we can query the model and obtain a more reliable answer.

Multiple prompts can be employed for predicting the same instance by more fully leveraging the inherent knowledge of language models to reduce bias, make the results more reliable, and enhance overall performance. In this paper, we specifically focus on voting in prompt ensembling.
% a widely used and generally effective ensemble method in the field of machine learning. 
and few-shot/few-shot-CoT in prompt augmentation.
% currently one of the most effective methods for eliciting and enhancing large language models' ability to tackle complex reasoning problems.
For convenience, we will still refer to them as prompt ensembling and prompt augmentation in the following context. 
The examples of the two methods are shown in Figure \ref{fig:pensemble}. 
% We use them for language models of different sizes and different natural language processing tasks. 

% As shown in Figure \ref{fig:pensemble} (left), we use three prompts to help predict the sentiment of the input text $x$ respectively. For each prompt, the language model outputs a word that represents sentiment tendency. Then we utilize a common ensemble method of majority voting to obtain the most likely result. Figure \ref{fig:pensemble} (right) demonstrates the few-shot prompting method, we consider each question-answer pair (example) as a prompt. By providing multiple examples that guide the language model to recall existing knowledge from different perspectives or to learn new knowledge, we can enhance the model's reasoning abilities, thereby increasing the likelihood of obtaining correct answers.

\begin{figure}[ht]
    \centering
    \includegraphics[width=1\linewidth]{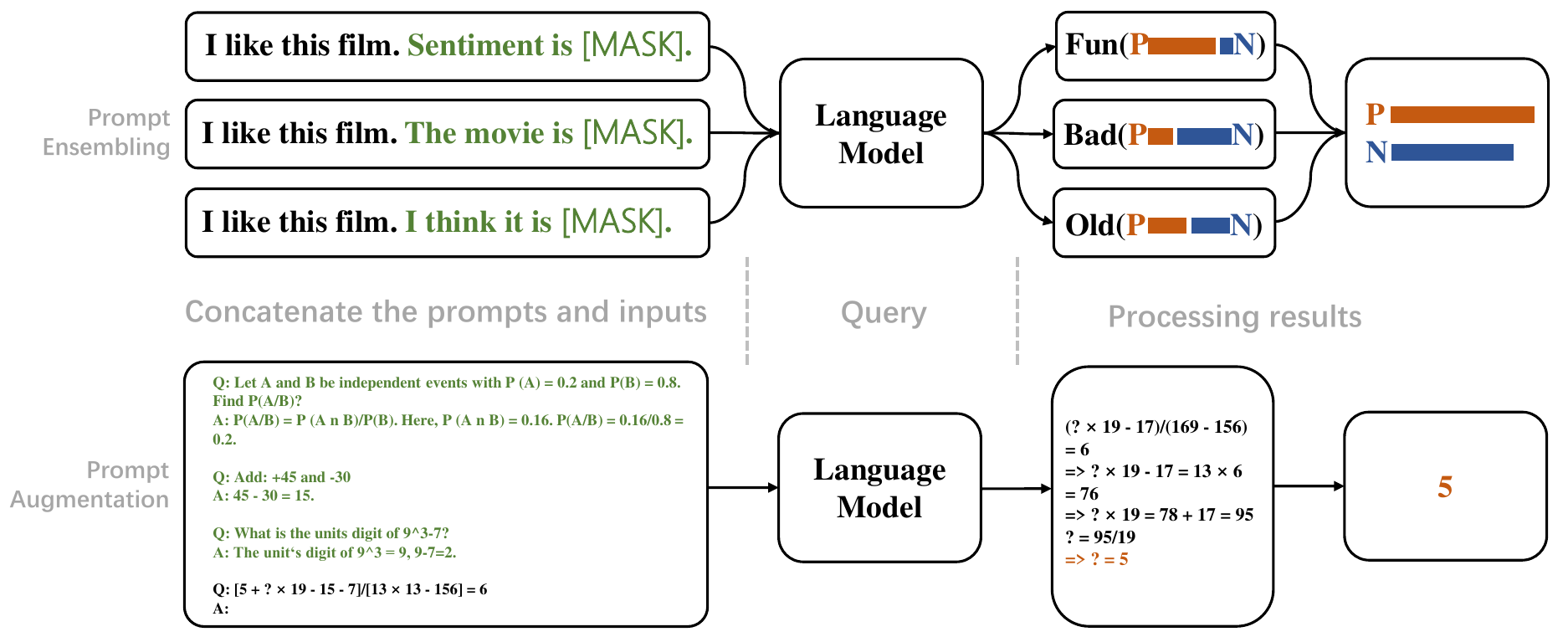}
    \caption{Examples of prompt ensembling and prompt augmentation.}
    \label{fig:pensemble}
\end{figure}

\subsection{Shapley values in multi-prompt learning}\label{subsec:PSV}

The Shapley value is introduced into machine learning by using prediction accuracy on a validation set as the utility function~\cite{datashapley,DBLP:conf/icde/0006XSL0P023,DBLP:journals/pacmmod/0006SL0P023}. In this way, each data point in the training set can be measured for its contribution or assessed for its value. Naturally, this paradigm can be applied to evaluate the value of prompts. 

\paragraph{The Shapley value} Recall that we have a set of $n$ prompts $\mathcal{P}=\{p_1,\ldots,p_n\}$. A \emph{coalition} $\mathcal{S}$ is a subset of $\mathcal{P}$ that cooperates to complete the task. Given a utility function $\mathcal{U}(\mathcal{S})$ evaluates the utility of a coalition $\mathcal{S}$ for the cooperative task, the marginal contribution of $p_i$ with respect to a coalition $\mathcal{S}$ $(p_i \notin \mathcal{S})$ is $\mathcal{U}(\mathcal{S}\cup \{p_i\})-\mathcal{U}(\mathcal{S})$. We have different definitions of the coalition between prompt ensembling and prompt augmentation. Refer to Figure \ref{fig:coalition} for the examples of coalition.

The Shapley value measures the expectation of marginal contribution by $p_i$ in all possible coalitions over $\mathcal{N}$. That is,
\begin{equation}\label{equ:SV}
  \mathcal{SV}_i=\frac{1}{n} \sum_{\mathcal{S}\subseteq \mathcal{N} \setminus \{p_i\}}   \frac{\mathcal{U}(\mathcal{S}\cup \{p_i\})-\mathcal{U}(\mathcal{S})}{\binom{n-1}{|\mathcal{S}|} }.
\end{equation} 
By computing Shapley values of prompts, we can fairly quantify their value in improving performance.

\begin{figure}[ht]
    \centering
    \includegraphics[width=0.8\linewidth]{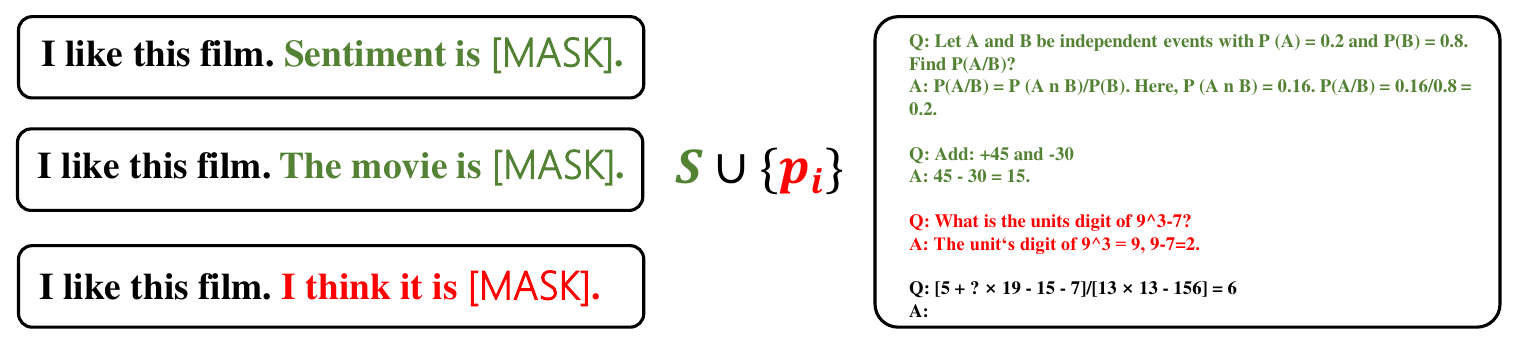}
    \caption{Examples of coalitions.}
    \label{fig:coalition}
\end{figure}

\section{Learning Shapley values of prompts}\label{section:learning}
To address the computational complexity of calculating Shapley values, especially for smaller models that can improve performance with a large number of prompts, we propose a machine learning-based method for prompt ensembling to predict Shapley values of prompts in two stages. In Section \ref{subsec:methodology}, we introduce the specific methodology, and we demonstrate the feasibility of the method in Section \ref{subsec:proof}.

\subsection{Methodology}\label{subsec:methodology}
For prompt sets related to the same task (e.g., sentiment analysis), we first calculate Shapley values of prompts in one set.
Then we use the predictions from the previous stage to train a simple model to predict Shapley values of prompts in another set.

Specifically, in the first stage, we use the Monte Carlo method~\cite{datashapley} to estimate Shapley values of prompts we already have. Considering texts cannot be used as inputs, we use a small pre-trained language model to encode the prompts into $d$-embeddings $\bm{\mathcal{E}} = [\bm{e}_1,\ldots,\bm{e}_n]\in\mathbb{R}^{|\mathcal{P}|\times d}$, where $\bm{e}_i$ is the embedding vector for prompt $p_i$, and $d$ is the dimensionality of the embedding.
Embeddings can effectively characterize and measure prompt similarity. For example, Sentence-BERT~\cite{DBLP:conf/emnlp/ReimersG19} measures the semantic similarity of prompts, while BERT~\cite{bert} measures that of tokens. Research indicates that in transformer-based models, each layer produces representations of different levels~\cite {DBLP:conf/emnlp/HaoDWX19}.

In the second stage, we use these embeddings as features and their calculated Shapley values as labels to obtain a training set and train a model $f$. When new prompts arrive, we encode them into embeddings and predict their Shapley values $\widehat{\mathcal{SV}_m} = f(\bm{e}_m;\bm{\theta})$, where $\widehat{\mathcal{SV}_m}$ is the predicted Shapley value for new prompt $m$ and $\bm{\theta}$ represents the parameters of the model. Refer to Algorithm \ref{alg:tmc} for details.

\begin{algorithm}
\caption{Learning Shapley values.}
\label{alg:tmc}\SetKwInOut{Input}{input}\SetKwInOut{Output}{output}
\SetCommentSty{itshape}
\Input{Prompt set $\mathcal{P} = \{p_1, \ldots, p_n\}$, \\ new prompt set $\mathcal{P}_{new} = \{pn_1, \ldots, pn_m\}$ \\ encoding model $\mathcal{M_E}$ \\ untrained model $f(\bm{e};\bm{\theta})$.}
\Output{Predicted Shapley values of new prompts $\widehat{\mathcal{SV}_1}, \dots, \widehat{\mathcal{SV}_m}$.} 
    \tcp{Prepare the training set}
    $\{\mathcal{SV}_1, \ldots, \mathcal{SV}_n\}$ $\gets$ Calculate through Monte Carlo method\;
    % $\{\bm{e}_1,\ldots,\bm{e}_n\}$ $\gets$ Encode the prompts by $\mathcal{M_E(P)}$\;
    \For{$j$ = 1 to $n$}{
        $\bm{e}_j = \mathcal{M_E}(p_j)$\;
    }
    \tcp{Train the model to predict}
    Train $f$ with the training set $\{(\bm{e}_1, \mathcal{SV}_1), \ldots, (\bm{e}_n, \mathcal{SV}_n)\}$\;
    \tcp{Predict Shaple values}
    \For{$j$ = 1 to $m$}{
        $\bm{e}_j = \mathcal{M_E}(pn_j)$\;
        $\widehat{\mathcal{SV}_j} = f(\bm{e}_j;\bm{\theta})$\;
    }
    % $\mathcal{SV}_i \gets 0$ $(1 \leq i \leq n)$;\\
    % \For{$t$ =1 to $\tau$}{
    %     $\pi^t \gets$ random permutation of the prompt set $\mathcal{P}$\;
    %     $\mathcal{U}(\emptyset) = 0$\;
    %     Calculate $\mathcal{U}(\pi^t)$ using Equation \ref{equ:sv}\;
    %     \For{j = 1 to $n$}{
    %         \tcp{Denote the first j prompts in $\pi^t$ as $\pi^t[:j]$}
    %         \If {$\mathcal{U}(\pi^t[:j]) - \mathcal{U}(\pi^t) < \epsilon$}
    %             {$\mathcal{U}(\pi^t[:j]) = \mathcal{U}(\pi^t[:j-1])$;}
    %         \Else{Calculate $\mathcal{U}(\pi^t[:j])$ using Equation \ref{equ:sv};}
    %     }
    %     \For{$i$ = 1 to $n$}{
    %         $\mathcal{SV}_i +=  \mathcal{U}(\pi^t[:j]) - \mathcal{U}(\pi^t[:j-1]);$
    %     }
    % }
    % \For{$i$ = 1 to $n$}{
    %     $\mathcal{SV}_i /= p; $
    % }
    \Return $\widehat{\mathcal{SV}_1}, \dots, \widehat{\mathcal{SV}_m}$\;
\end{algorithm}

\subsection{Lipschitz continuity}\label{subsec:proof}
\paragraph{Similar prompts receive similar values}
Intuitively, we hope that if two prompts $p$ and $p^{\prime}$ are similar according to some appropriate metric, they will receive similar Shapley values. We confirm this intuition when the utility function is Lipschitz continuity.

\begin{definition}[\textbf{Lipschitz Continuity}]
A function $f:\mathbb{R}^n \rightarrow \mathbb{R}^m$ is said to be Lipschitz continuous if there exists a constant $L \geq 0$ such that for all $x, y \in \mathbb{R}^n$, $\|f(x) - f(y)\| \leq L\|x - y\|$.
\end{definition}
This definition implies that the change in the output of the function is bounded by a constant multiple of the change in the input. 

\paragraph{Lipschitz Continuity of the utility function }
As shown in Figure \ref{fig:pensemble}, the input of the utility function consists of a set of prompts, and each prompt yields a predicted probability for every $x_k$ in $\mathcal{V}$. Owing to the ensemble of prompts, an ensemble probability is assigned to each $x_k$. Subsequently, the final output is the overall accuracy of the dataset.

We simplify each prompt and its prediction into a sub-classifier $h_i$, for one data point $x$, each classifier gives a probabilistic prediction $h_i(x)$, and the final prediction $f(x)$ is given by the average of all classifier probabilities: $f(x) = \frac{1}{N}\sum_{i=1}^{N}h_i(x)$. Now, suppose that we modify the probabilistic prediction of the $k$-th classifier
% one of the classifiers, say the $k$-th classifier, 
from $h_k(x)$ to $h_k'(x)$, this will affect the output of the ensemble, becoming:
\begin{equation}
    f'(x)=\frac{1}{N}\left(\sum_{i=1}^{k-1}h_i(x)+h_k'(x)+\sum_{i=k+1}^{N}h_i(x)\right).
\end{equation}
Then we can calculate the difference between the prediction results of $f(x)$ and $f'(x)$
\begin{align}\label{eq4}
    |f'(x)-f(x)|&=\left|\frac{1}{N}\left(\sum_{i=1}^{k-1}h_i(x)+h_k'(x)+\sum_{i=k+1}^Nh_i(x)\right)-\frac{1}{N}\sum_{i=1}^Nh_i(x)\right|\\
    &=\frac{1}{N}|h_k'(x)-h_k(x)|.
\end{align}
This means that as $N$ increases, the impact of any single classifier change on the output of the overall ensemble will decrease accordingly. 
The Lipschitz constant of the ensemble can be thought of as $\frac{1}{N}$, which decreases as $N$ increases. Next, we discuss the bound.

% \todo{Modify symbols}Then consider the entire dataset $\mathcal{V}$, $\mathcal{}$
\begin{definition}[\textbf{Beta Distribution}]
The probability density function(PDF) of the Beta distribution, for $0 \leq x \leq 1$ and shape parameters $\alpha, \beta \geq 0$, is a power function of the variable $x$ and of its reflection $(1-x)$ as follows:
\begin{equation*}
    f(p; \alpha, \beta) = \frac{p^{\alpha - 1} (1 - p)^{\beta - 1}}{B(\alpha, \beta)},
\end{equation*}
where the Beta function $B(\alpha, \beta) = \int_{0}^{1} t^{\alpha-1} (1-t)^{\beta-1} \, dt$.
\end{definition}

Then consider the entire dataset $\mathcal{V}$. Note that the binary prediction distribution \( P_{p_i} \) has the same support as a Beta distribution, allowing \( P_{p_i} \) to be mathematically modeled using a Beta distribution. 
Equation \ref{eq4} illustrates that changing a single classifier will not affect the predicted probability of one data point by more than $\frac{1}{N}$, recorded as $\epsilon$. Consider the worst case: after modifying a classifier, data points with a predicted result of $(0.5 \pm \epsilon, 0.5 \mp \epsilon)$ change all predictions from correct to incorrect. The cumulative probability of this interval is $P(0.5-\epsilon\leq X\leq0.5+\epsilon)=\int_{0.5-\epsilon}^{0.5+\epsilon}f(x;\alpha,\beta)dx$.
According to the central limit theorem, when the dataset $\mathcal{V}$ is large (i.e., the sample size for prediction is large), the Beta distribution can be approximated by a normal distribution $ X \sim N\left(\frac{\alpha}{\alpha+\beta},\sqrt{\frac{\alpha\beta}{(\alpha+\beta)^2(\alpha+\beta+1)}}\right) $.We can approximately calculate the cumulative probability in the interval $(0.5 - \epsilon, 0.5 + \epsilon)$:
\begin{align}\label{e6}
    P(0.5-\epsilon \leq X \leq0.5+\epsilon)&\approx\Phi\left(\frac{0.5+\epsilon-\mu}\sigma\right)-\Phi\left(\frac{0.5-\epsilon-\mu}\sigma\right)\\
    &\approx \frac{2\epsilon}{\sqrt{2\pi}\sigma}\left(1-\frac{(0.5-\mu)^2}{3\sigma^2}\right).
\end{align}
Where $\mu = \frac{\alpha}{\alpha+\beta}$, $\sigma = \sqrt{\frac{\alpha\beta}{(\alpha+\beta)^2(\alpha+\beta+1)}}$ and $\epsilon = \frac1N|h'_k(x)-h_k(x)|$.This indicates a probability $P$ that the model prediction value for a given data point falls within the interval $(0.5 - \epsilon, 0.5 + \epsilon)$. This represents the worst case where the model diminishes the accuracy of the dataset. We demonstrate in Appendix \ref{appendix:proofs} that this probability is still small and is related to $ \frac{1}{N} | h_k'(x) - h_k(x) | $. Additionally, the following experimental results show that randomly adding prompts to the prompt ensemble has almost no impact on the final accuracy.

\begin{theorem}\label{t1}
Let~$\mathcal{U}$ be a utility function. if ~$\mathcal{U}$ is Lipschitz continuous with respect to some norm $||\cdot||$ on the input space with a Lipschitz constant $L$, then for any two inputs $\bm{e}_1$ and $\bm{e}_2$ corresponding to similar prompts, the absolute difference in their Shapley value $\mathcal{SV}_i$ and $\mathcal{SV}_j$ is bounded by $L$ time the norm of the difference of the inputs, that is:
\begin{equation}\label{1}
|\mathcal{U}(S \cup \{\bm{e}_i\}) - \mathcal{U}(S \cup \{\bm{e}_j\})| \leq L \cdot ||\bm{e}_i - \bm{e}_j|| \implies  |\mathcal{SV}_i - \mathcal{SV}_j| \leq L \cdot ||\bm{e}_i - \bm{e}_j||,
\end{equation}
where $S$ is coalition of embeddings except $\bm{e}_i$ and $\bm{e}_j$. 
\end{theorem}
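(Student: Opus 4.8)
The plan is to reduce $\mathcal{SV}_i - \mathcal{SV}_j$ to a single sum over coalitions that contain neither the $i$-th nor the $j$-th prompt, so that every summand is exactly a difference of the form $\mathcal{U}(\mathcal{S} \cup \{\bm{e}_i\}) - \mathcal{U}(\mathcal{S} \cup \{\bm{e}_j\})$ that the hypothesis controls. Starting from the definition in Equation \ref{equ:SV}, I would write out both $\mathcal{SV}_i$ and $\mathcal{SV}_j$ and split each defining sum according to whether the coalition $\mathcal{S}$ contains the \emph{other} distinguished element. This partitions the terms into coalitions containing neither $\bm{e}_i$ nor $\bm{e}_j$, and coalitions where one of the two is already present; the latter are reindexed by writing $\mathcal{S} = \mathcal{T} \cup \{\bm{e}_j\}$ (respectively $\mathcal{S} = \mathcal{T} \cup \{\bm{e}_i\}$) with $\mathcal{T} \subseteq \mathcal{N} \setminus \{\bm{e}_i, \bm{e}_j\}$.

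Upon subtracting, the stand-alone terms $\mathcal{U}(\mathcal{S})$ cancel in the ``neither'' family, and the full-coalition terms $\mathcal{U}(\mathcal{T} \cup \{\bm{e}_i, \bm{e}_j\})$ cancel in the ``one-present'' family, leaving in both families only the differences $\mathcal{U}(\mathcal{T} \cup \{\bm{e}_i\}) - \mathcal{U}(\mathcal{T} \cup \{\bm{e}_j\})$, but carrying weights $\binom{n-1}{|\mathcal{T}|}^{-1}$ and $\binom{n-1}{|\mathcal{T}|+1}^{-1}$ respectively. The crucial algebraic step, which I expect to be the main obstacle, is collapsing these two weights through the binomial identity
\[
\frac{1}{\binom{n-1}{s}} + \frac{1}{\binom{n-1}{s+1}} = \frac{n}{(n-1)\binom{n-2}{s}},
\]
which, after absorbing the $\tfrac{1}{n}$ prefactor, yields the closed form
\[
\mathcal{SV}_i - \mathcal{SV}_j = \frac{1}{n-1} \sum_{\mathcal{S} \subseteq \mathcal{N} \setminus \{\bm{e}_i, \bm{e}_j\}} \frac{\mathcal{U}(\mathcal{S} \cup \{\bm{e}_i\}) - \mathcal{U}(\mathcal{S} \cup \{\bm{e}_j\})}{\binom{n-2}{|\mathcal{S}|}}.
\]
Verifying the identity reduces to rewriting the reciprocal binomial coefficients as factorials and checking that the bracket $[(n-1-s)+(s+1)]$ equals $n$; the real difficulty is keeping the index shifts and the $|\mathcal{T}|\mapsto|\mathcal{T}|+1$ bookkeeping consistent while matching the two families, since this is where sign or off-by-one errors are easiest to make.

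From here the bound is routine. I would apply the triangle inequality to the closed form, then invoke the Lipschitz hypothesis $|\mathcal{U}(\mathcal{S} \cup \{\bm{e}_i\}) - \mathcal{U}(\mathcal{S} \cup \{\bm{e}_j\})| \leq L \cdot \|\bm{e}_i - \bm{e}_j\|$ termwise, pulling the constant $L \cdot \|\bm{e}_i - \bm{e}_j\|$ outside the sum. The remaining combinatorial factor is $\frac{1}{n-1} \sum_{\mathcal{S} \subseteq \mathcal{N} \setminus \{\bm{e}_i, \bm{e}_j\}} \binom{n-2}{|\mathcal{S}|}^{-1}$; grouping coalitions by cardinality $s$, there are exactly $\binom{n-2}{s}$ subsets of size $s$ in the $(n-2)$-element set $\mathcal{N} \setminus \{\bm{e}_i, \bm{e}_j\}$, so each cardinality class contributes $1$ and the entire sum equals $n-1$. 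This cancels the $\frac{1}{n-1}$ prefactor exactly and gives $|\mathcal{SV}_i - \mathcal{SV}_j| \leq L \cdot \|\bm{e}_i - \bm{e}_j\|$, which is the desired conclusion.
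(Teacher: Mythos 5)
Your proposal is correct and follows essentially the same route as the paper's proof: the same splitting of both Shapley sums by whether the coalition contains the other distinguished prompt, the same cancellation leaving only terms $\mathcal{U}(\mathcal{S}\cup\{\bm{e}_i\})-\mathcal{U}(\mathcal{S}\cup\{\bm{e}_j\})$, the same binomial identity (your displayed identity is exactly the paper's Lemma \ref{l1} multiplied through by $n$), and the same triangle-inequality plus coefficient computation showing the combinatorial factor equals $1$. In fact, your bookkeeping of the cancellation (the ``neither'' versus ``one-present'' families) is spelled out more explicitly than in the paper's own write-up.
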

The complete proof of Theorem \ref{t1} can be found in the Appendix \ref{appendix:proofs}.
% The proof of Theorem~\ref{t1} is in App.~\ref{sec:appendix}. 
The Theorem quantitatively constrains the divergence in the Shapley values for two inputs by linking it directly to the inputs' disparity, scaled by the Lipschitz constant $L$. So Theorem \ref{t1} provides a theoretical justification for the learning method, intuitively, it guarantees that similar inputs(i.e., small $||\bm{e}_i - \bm{e}_j||$) imply similar outputs(i.e., $|\mathcal{SV}_i - \mathcal{SV}_j|$). Therefore, Shapley values of prompts in prompt ensembling are learnable.

\section{Experiments}\label{section:experiments}
In this section, we conduct a series of experiments and present the empirical evaluation of the proposed methods. In Section \ref{subsec:setup},  we provide details of the experimental setup.
In Sections \ref{resadd}, we conduct experiments to verify the effectiveness of the Shapley value. 
In Section \ref{reslearn}, we demonstrated the efficiency of our learning-based method.

% The results demonstrate that Shapley values can equitably measure the contribution of prompts, thereby enabling (1) promoting prompt selection to find better prompt sets and achieve competitive performance, (2) providing guidance for prompt valuation in the prompt market, and (3) potentially serving as a method for attributing outputs to large models. Furthermore, the results of our learning Shapley value method indicate that a larger-scale prompt market is promising.

\subsection{Experimental setup}\label{subsec:setup}
% For prompt ensemble and prompt augmentation, we use their most commonly used specific instance respectively, that is, majority voting and few-shot-CoT prompting.

\paragraph{Datasets and tasks}
For prompt ensembling, we conducted experiments on the following natural language inference task: Stanford Sentiment Treebank (SST2, sentiment analysis)~\cite{sst2}, which is also used in our learning-based method. For prompt augmentation, we focus on complex reasoning tasks: AQuA (arithmetic reasoning)~\cite{aqua} and Bigbench Date (commonsense reasoning)~\cite{bigbench}.
% and date understanding (commonsense reasoning)~\cite{bigbench}. 
The evaluation metric for all tasks is the accuracy.

\paragraph{Models}
For prompt ensembling, our experiments are based on BERT-base (110M parameters)~\cite{bert} which has had a significant impact on the field of natural language processing in its early stages. We use the checkpoint provided by HuggingFace~\cite{huggingface}. For prompt augmentation, we utilize GPT-3.5-turbo, which is suitable at the time we conduct the experiments. It strikes a good trade-off between convenience, capability, and price. We directly call the OpenAI API to use the model.

% we perform sentiment analysis tasks using SST-2~\cite{sst2} dataset, which is part of GLUE, a widely-used benchmark, and utilize the RoBERTa pre-trained language model. We employ the method described in prior work~\cite{autoprompt} to construct a label map for categorizing different output classes. Furthermore, for inference tasks, we employ benchmark datasets from several categories of reasoning tasks using the public GPT-3 of the text-davinci-002 version with 175B parameters. For reasoning tasks, we choose the machine translation task and utilize the same model as in the reasoning tasks.

\paragraph{Prompts}
All prompts used for the inference tasks and the subsequent learning-based method are generated by ChatGPT. There is no need to consider the content of the prompts, we only want to value them. As for the reasoning tasks, the prompts come from~\cite{wei22cot} which are reused in the following work. In this way, we can conveniently compare results with related work.

% Given our primary focus on the relative value between prompts, we choose to utilize ChatGPT to generate a prompt set for each inference task. This set will be used for subsequent Shapley value calculations, enabling a more comprehensive understanding of each prompt's contribution within the ensemble. Through this approach, we can gain a better insight into the relative importance of prompts and optimize our NLP tasks accordingly.

\paragraph{Implements}
For inference tasks, our specific implementation mainly follows ~\cite{autoprompt} including result generation and mapping. We use ``\{sentence\} \{prompt\} \texttt{[MASK]}.'' as the template. For reasoning tasks, we set \text{temperature} to 0 with greedy decode and the input formats are the same as ~\cite{wei22cot}.

\subsection{Results for prompt valuation}\label{resadd}
\paragraph{Identifying valuable prompts}
To demonstrate the effectiveness of the Shapley value in evaluating the value of prompts, we sort prompts from high to low based on their Shapley values, adding one prompt to perform the task each time. We also conducted leave-one-out experiments for comparison. Specific results can be found in Figure \ref{fig:addaug}. When prompts with higher Shapley values are added, they have a positive impact on the final prediction. However, as prompts with lower Shapley values are added, the information provided by the prompts may become inaccurate, irrelevant, or misleading, which interferes with the model's predictions, leading to a decline in the quality of the prediction results. 
% analysis
The results suggest that simply increasing the number of prompts does not lead to better results. The key is to choose prompt combinations with higher values so that we can achieve higher performance with fewer valuable prompts.
% We believe that high-value prompts extract complementary knowledge from each other, enabling large models to more accurately address a broader range of questions. Conversely, low-value prompts often struggle to extract comprehensive knowledge or extract it in a contradictory manner. 

\paragraph{Competitive performance of selected prompt combination}
To intuitively demonstrate the superiority of our method, we compare the highest accuracy achieved when using the selected prompt combination with baselines, the results are shown in Table \ref{tab:acc}. The suitable prompt combination selected through the Shapley value can not only perform tasks effectively with fewer prompts but also remain competitive performance. For AQuA (Date), the performance of 4 (6) shots is lower than that of 2 (3) shots. it avoids a large number of prompts occupying much of the input space for prompt augmentation which is worth considering for potentially complex tasks.

More intuitively, for prompt ensembling used in SST2, we only need to vote with less than 200 high-value prompts to achieve performance close to that of well-designed automated methods. More importantly, these prompts can be fully generated by chatbots, which is almost zero cost compared to automated methods that take a lot of time to search for an optimal prompt.
% Note that even though our goal is not to achieve state-of-the-art, the results from selected manually designed prompts are already competitive.

% influence
% Through the equitable valuation of Shapley values, we can identify what kind of prompts are of high value, and thus analyze them to design more prompts, achieving quality over quantity. 

% Although the input length is usually enough for current large language models, 
% Furthermore, Shapley values provide a fair pricing method in the prompt market, as it takes into account the interactions between prompts.

% To better illustrate the effectiveness of Shapley values in evaluating the value of prompts, we separately select prompts with positive and negative Shapley values, incrementally add them to the joint set, and evaluate the accuracy. Specific results can be found in Figures \ref{fig:addall} and \ref{fig:addneg}. We believe that high-quality prompts extract complementary knowledge from each other, enabling large models to more accurately address a broader range of questions. Conversely, low-quality prompts often struggle to extract comprehensive knowledge or extract it in a contradictory manner. Therefore, as the number of prompts increases, accuracy tends to decrease. The result demonstrates the effectiveness of the proposed method.

\begin{figure}[htbp]
    
    \centering
            \centering
            \includegraphics[width=\textwidth]{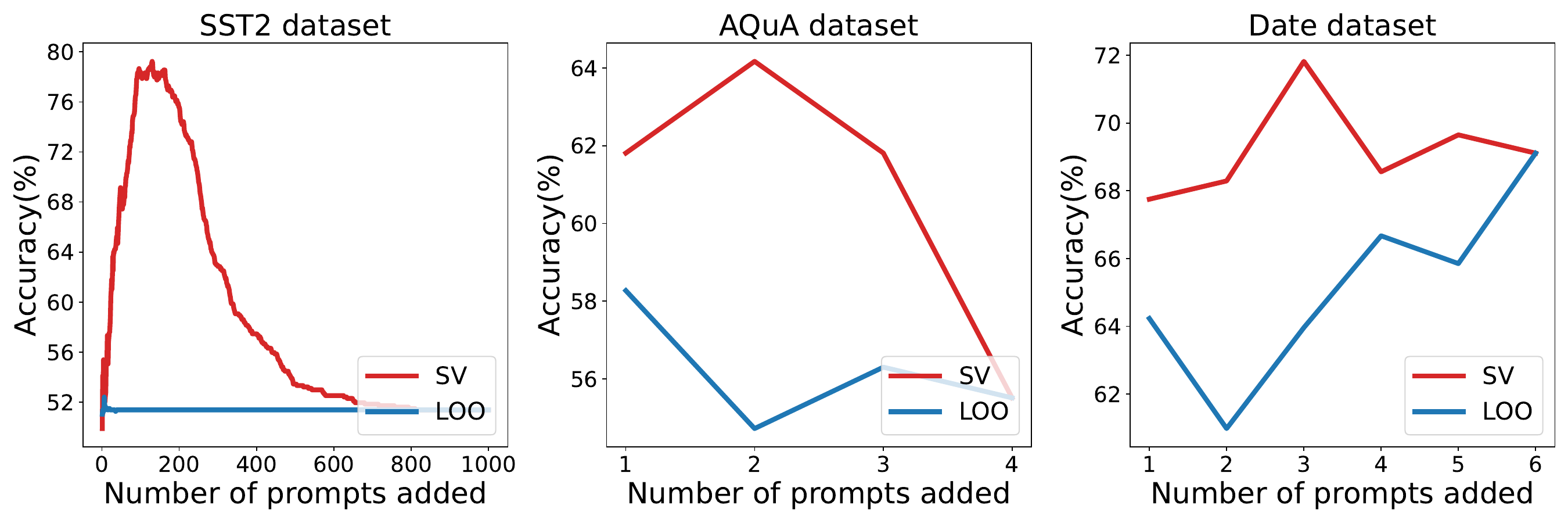}
        \caption{Results of SST2 with BERT-base, as well as AQuA and Date with GPT-3.5-turbo and Manual-CoT. We add the currently most valuable prompt to the combination iteratively. For comparison, we also calculate the leave-one-out (LOO) value and combine prompts in the same manner.}
        \label{fig:addaug}
\end{figure}

\begin{table}[htbp]
  \caption{We compare our methods with baselines: AutoPrompt~\cite{autoprompt} and Manual (reported in~\cite{autoprompt}) for SST2, Manual-CoT~\cite{wei22cot} for AQuA and Date, Manual (prompts from~\cite{wei22cot} but without rationale), Auto-CoT~\cite{autocot}, Active-CoT~\cite{activecot} and Complex-CoT (9 steps)~\cite{complexcot} for AQuA.}
  \label{tab:acc}
  \centering
  \begin{tabular}{lccc}
    \toprule
    % \multicolumn{2}{c}{Part}                   \\
    % \cmidrule(r){1-2}
    ~ & SST2  & AQuA & Date \\
    % \hline
    \midrule
    AutoPrompt/Manual  & 80.9/63.2 & ~ & ~ \\
    % Manual & 63.2 & ~ & ~ \\
    Ours  & 78.9 & ~ & ~ \\
    \midrule
    Manual-CoT (+Ours) & ~ & 55.5 (64.2) & 69.1 (71.8) \\
    Manual (+Ours) & ~ & 30.3 (35.4) & 53.7 (57.5) \\
    % Manual-CoT+Ours  & ~ & 64.2 & 69.1 \\
    % \hline
    Auto-CoT (+Ours)  & ~ & 59.1 (60.6) & - \\
    Active-CoT (+Ours) & ~ & 56.3 (59.4) & - \\
    Complex-CoT (+Ours) & ~ & 61.8 (63.8) & - \\
    % Auto-CoT+Ours  & ~ & 60.6 & - \\
    \bottomrule
  \end{tabular}
\end{table}

% \begin{figure}[ht]
%     \centering
%     \includegraphics[width=0.9\linewidth]{}
%     \caption{Adding prompts with positive Shapley values.}
%     \label{fig:addall}
% \end{figure}

% \begin{figure}[ht]
%     \centering
%     \includegraphics[width=0.9\linewidth]{}
%     \caption{Adding prompts with negative Shapley values.}
%     \label{fig:addneg}
% \end{figure}

\subsection{Results for learning Shapley values}\label{reslearn}
We conducted empirical experiments to verify the effectiveness of our learning-based methods. 
We use the embeddings of prompts that are used for the SST2 as training data, and their Shapley values as labels.
Specifically, we follow~\cite{DBLP:journals/corr/abs-1906-04165} and average the outputs from BERT's last four layers as the embedding to obtain a richer semantic representation of the prompt. In contrast, Sentence-BERT (SBERT) derives semantically meaningful sentence embeddings from the last layer, and we use them without further processing.
Then, we train the Bayesian Ridge regressor, Gaussian Process regressor, and Linear regressor to fit the data, respectively.
Lastly, we use the additional 100 prompts as the test set, encoding them into embeddings in the same way as the training prompts, and taking their Shapley values estimated using a huge number of Monte Carlo iterations as the truth Shapley values for comparison.

We first calculate the Pearson coefficient between predicted Shapley values and truth Shapley values. As shown in Table \ref{tab:rank}, Shapley values predicted using SBERT embeddings are closer to truth Shapley values overall, and they are more stable for the machine learning model compared to Shapley values predicted using BERT embeddings, which means the well-designed semantic representation is more representative.
We also conduct sorting and adding experiments as mentioned above. The results can be found in Figure \ref{fig:pred}. The accuracy curve obtained through SBERT embeddings is more similar to the shape of the truth accuracy curve. It's worth noting that the Pearson coefficient of Shapley values predicted by BERT embeddings is lower, but with only about 20 prompts, the accuracy is greater than 80\%, even surpassing AutoPrompt. We assume that the method of sorting and adding prompts one by one cannot obtain the global optimal prompt combination, related research can be seen in~\cite{wang2024rethinking}, which is a possible direction for future research.

\begin{table}
  \caption{The Pearson coefficient between Shapley values estimated by the Monte Carlo method and Shapley values predicted by different machine learning models and embeddings.}
  \label{tab:rank}
  \centering
  \begin{tabular}{lccc}
    \toprule
    ~ & Bayesian Ridge & Gaussian Process & Linear Regression \\
    \midrule
    Embeddings from BERT & 0.72 & 0.57 & 0.66 \\ 
    Embeddings from SBERT & 0.71 & 0.74 & 0.70 \\%7
    \bottomrule
  \end{tabular}
\end{table}

\begin{figure}[ht]
    \centering
    \includegraphics[width=1\linewidth]{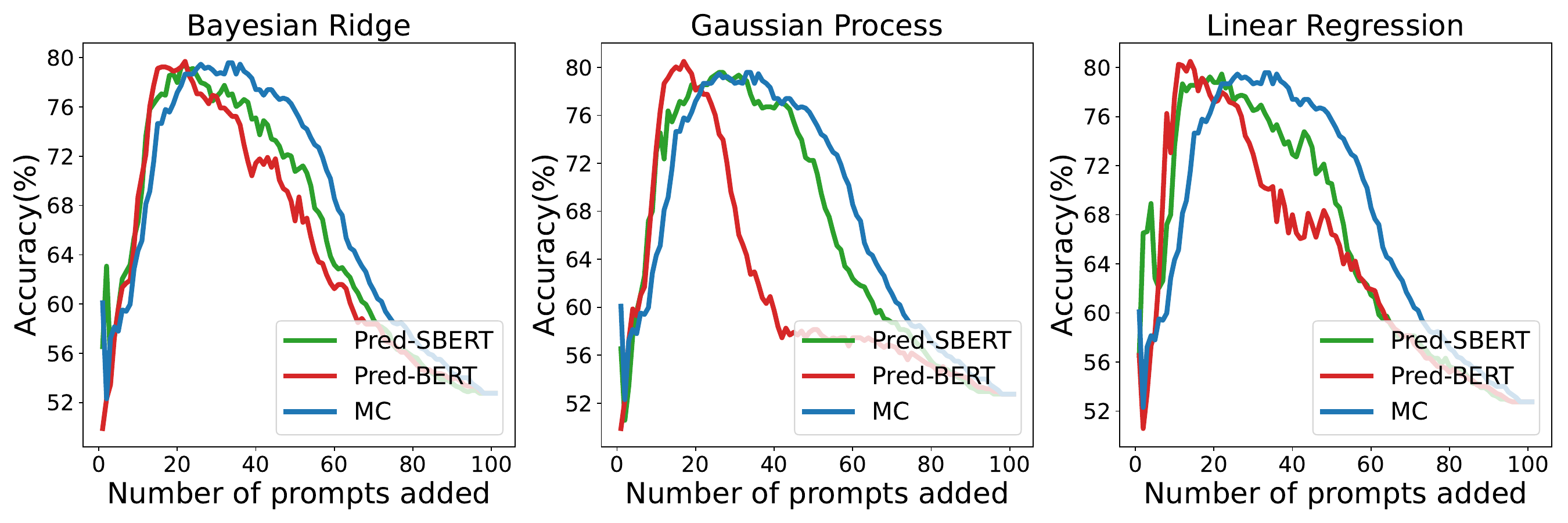}
    \caption{Sort prompt based on Shapley values obtained by the three methods, add prompts, and calculate accuracy separately on SST2.}
    \label{fig:pred}
\end{figure}

\section{Conclusion, limitations and future work}\label{section:conclusion}
% Multi-prompt learning is widely used to overcome the disadvantages of single-prompt learning. Existing methods only initially select a fixed number of prompts from the whole set, they cannot further refine the result of selection, or even hard to determine how many of them to utilize.
% How to quantify the value of each prompt in multi-prompt learning and further utilize it to enhance performance is challenging. 
In this paper, we apply the Shapley value to quantify the value of prompts, thus we can evaluate each prompt, obtaining a combination with fewer but performant prompts.
% by calculating the utility of each possible prompt coalition's utility and determining each prompt's marginal contribution
% Our general method can be extended to selected prompts from common prompt selection methods, and language models whose outputs are influenced by the prompts. 
Our method can be an extension of the existing selection methods and can be used for any selected prompts to choose fewer prompts that achieve higher performance.
We also propose a learning-based method to alleviate the burden of calculating the Shapley value when the number of prompts is large and dynamic. Experiments show the effectiveness and efficiency. Overall, we believe that using Shapley values for prompt valuation is worth further research. 

Inevitably, implementing our method needs the complicated calculation of Shapley values. Although we have proposed a learning-based method for real-time prompt valuation, it does not apply to large language models, because there are usually not enough prompts to avoid overfitting or improve generalization and resource consumption is also unacceptable. Therefore, it is necessary to tailor methods specific to large language models to simplify computations. In addition, future research might discuss how to more broadly apply our methods in multi-prompt learning, such as different types of inputs in multimodal models. This involves how to define coalition and how to define appropriate utility functions. 
% Lastly, by utilizing the attribution ability and interpretability of the Shapley value, researching the characteristics of the prompt itself can help inspire the design of prompts.

\bibliographystyle{apalike}
\bibliography{prompt}
% \section*{References}

% References follow the acknowledgments in the camera-ready paper. Use unnumbered first-level heading for
% the references. Any choice of citation style is acceptable as long as you are
% consistent. It is permissible to reduce the font size to \verb+small+ (9 point)
% when listing the references.
% Note that the Reference section does not count towards the page limit.
% \medskip

% {
% \small

% [1] Alexander, J.A.\ \& Mozer, M.C.\ (1995) Template-based algorithms for
% connectionist rule extraction. In G.\ Tesauro, D.S.\ Touretzky and T.K.\ Leen
% (eds.), {\it Advances in Neural Information Processing Systems 7},
% pp.\ 609--616. Cambridge, MA: MIT Press.

% [2] Bower, J.M.\ \& Beeman, D.\ (1995) {\it The Book of GENESIS: Exploring
%   Realistic Neural Models with the GEneral NEural SImulation System.}  New York:
% TELOS/Springer--Verlag.

% [3] Hasselmo, M.E., Schnell, E.\ \& Barkai, E.\ (1995) Dynamics of learning and
% recall at excitatory recurrent synapses and cholinergic modulation in rat
% hippocampal region CA3. {\it Journal of Neuroscience} {\bf 15}(7):5249-5262.
% }

%%%%%%%%%%%%%%%%%%%%%%%%%%%%%%%%%%%%%%%%%%%%%%%%%%%%%%%%%%%%
\clearpage
\appendix

\section{Complete proof}\label{appendix:proofs}
% Optionally include supplemental material (complete proofs, additional experiments and plots) in appendix.
% All such materials \textbf{SHOULD be included in the main submission.}
\subsection{Proof of Equation \ref{e6}}\label{e6proof}
\paragraph{Account for changes in accuracy}First, we simplify each prompt into a sub-classifier $h_i$ and the prompt ensemble into a final classifier $f$. Let $f'$ be the model with $h_i$ replaced by $h_i'$. We then consider the change in accuracy on the dataset $\mathcal{V}$:
\begin{align*}
    |\mathcal{U}(f) &- \mathcal{U}(f')|,\\
    \mathcal{U}(f) = \mathcal{ACC}(f) &= \frac{1}{|\mathcal{V}|}\sum_{(x, y) \in \mathcal{V}} \mathbb{I}(f(x) = y) ,
\end{align*}
where $ \mathbb{I} $ is the indicator function that equals 1 if $f(x) = y $ and 0 otherwise.

After replacing the $k$-th sub-classifier $h_k$ with $h_k'$, the new model $f'$ has an accuracy:
\begin{equation*}
    \mathcal{U}(f') = \mathcal{ACC}(f') = \frac{1}{|\mathcal{V}|}\sum_{(x, y) \in \mathcal{V}} \mathbb{I}(f(x) = y).
\end{equation*}

The change in the model output for a data point $x$ when replacing $h_k$ with $h_k'$ is:
\begin{align*}
      |f(x) - f'(x)| &= \left| \frac{1}{N} \sum_{i=1}^{N} h_i(x) - \left( \frac{1}{N} \sum_{i \neq k} h_i(x) + \frac{1}{N} h_k'(x) \right) \right|\\
      &= \left| \frac{1}{N} h_k(x) - \frac{1}{N} h_k'(x) \right|\\
      &= \frac{1}{N} |h_k(x) - h_k'(x)|.
\end{align*}
The accuracy of the dataset changes depending on how many data points have their classification results changed. If for each $x$, changing $\frac{1}{N} |h_k(x) - h_k'(x)|$ causes the model output to change from correct to incorrect or from incorrect to correct, then accuracy will be affected.

Assume that the model output of $m$ data points changes from correct to incorrect or from incorrect to correct after the sub-classifier $h_k$ is replaced by $h_k'$. The impact of such $m$ data points on the accuracy is:
\begin{equation}\label{eq8}
    |\mathcal{U}(f) - \mathcal{U}(f')| = \frac{m}{|\mathcal{V}|}.
\end{equation}
\paragraph{Estimate an upper bound on $m$}
For each data point $x$, the change $|f(x) - f'(x)| = \frac{1}{N}|h_k(x) - h_k'(x)|$, recorded as $\epsilon$, is small when $N$ is large, most data points will not have their classification results changed. Consider the worst-case scenario: data points with a predicted result of $(0.5 \pm \epsilon, 0.5 \mp \epsilon)$ all change from correct prediction to incorrect prediction after modifying a classifier. So we have an upper bound on $m$:
\begin{equation}\label{eq9}
    \frac{m}{|\mathcal{V}|} \leq P(0.5-\epsilon \leq X \leq0.5+\epsilon).
\end{equation}
In the main text, we show that the model's predictions can be mathematically modeled using a Beta distribution $X\sim\mathrm{Be}(\alpha,\beta)$ and according to the central limit theorem, we approximate it to a normal distribution $X \sim N\left(\frac{\alpha}{\alpha+\beta},\sqrt{\frac{\alpha\beta}{(\alpha+\beta)^2(\alpha+\beta+1)}}\right)$, so we get an approximation of this probability:
\begin{equation}\label{eq10}
    P(0.5-\epsilon \leq X \leq0.5+\epsilon)\approx\Phi\left(\frac{0.5+\epsilon-\mu}\sigma\right)-\Phi\left(\frac{0.5-\epsilon-\mu}\sigma\right).
\end{equation}
In order to obtain a polynomial with respect to $\epsilon$, We perform Taylor series expansion on Equation \ref{eq10}
\begin{equation*}
    \Phi(z)\approx\frac12+\frac1{\sqrt{2\pi}}\left(z-\frac{z^3}{3!}+\frac{z^5}{5!}-\cdots\right).
\end{equation*}
For the convenience of calculation, we expand to the 3rd order and substitute  $\frac{0.5 \pm \epsilon - \mu}{\sigma}$ into the Taylor series expansion of $\Phi(z)$, we get:
\begin{align*}
    \Phi\left(\frac{0.5 + \epsilon - \mu}{\sigma}\right) \approx \frac{1}{2} + \frac{1}{\sqrt{2\pi}} \left( \frac{0.5 + \epsilon - \mu}{\sigma} - \frac{\left(\frac{0.5 + \epsilon - \mu}{\sigma}\right)^3}{3!} + \cdots \right),\\
    \Phi\left(\frac{0.5 - \epsilon - \mu}{\sigma}\right) \approx \frac{1}{2} + \frac{1}{\sqrt{2\pi}} \left( \frac{0.5 - \epsilon - \mu}{\sigma} - \frac{\left(\frac{0.5 - \epsilon - \mu}{\sigma}\right)^3}{3!} + \cdots \right).
\end{align*}
Subtracting these two expansions:
\begin{equation*}
    \Delta_\Phi \approx \frac{1}{\sqrt{2\pi}} \left( \frac{0.5 + \epsilon - \mu}{\sigma} - \frac{\left(\frac{0.5 + \epsilon - \mu}{\sigma}\right)^3}{3!} - \left( \frac{0.5 - \epsilon - \mu}{\sigma} - \frac{\left(\frac{0.5 - \epsilon - \mu}{\sigma}\right)^3}{3!} \right) \right).
\end{equation*}
Simplifying each term:
\begin{equation*}
    \frac{0.5 + \epsilon - \mu}{\sigma} - \frac{0.5 - \epsilon - \mu}{\sigma} = \frac{(0.5 + \epsilon - \mu) - (0.5 - \epsilon - \mu)}{\sigma} = \frac{2\epsilon}{\sigma}.
\end{equation*}
And,
\begin{equation*}
    \left(\frac{0.5 + \epsilon - \mu}{\sigma}\right)^3 - \left(\frac{0.5 - \epsilon - \mu}{\sigma}\right)^3.
\end{equation*}
Using the binomial expansion:
\begin{equation*}
    (a + b)^3 - (a - b)^3 = 2ab(a + b) + 2ab(a - b) = 4a^2b,
\end{equation*}
where $a = \frac{0.5 - \mu}{\sigma}$ and $b = \frac{\epsilon}{\sigma}$, thus,
\begin{equation*}
    \left(\frac{0.5 + \epsilon - \mu}{\sigma}\right)^3 - \left(\frac{0.5 - \epsilon - \mu}{\sigma}\right)^3 = 4 \left(\frac{0.5 - \mu}{\sigma}\right)^2 \left(\frac{\epsilon}{\sigma}\right).
\end{equation*}
Therefore,
\begin{align*}
    \Phi\left(\frac{0.5 + \epsilon - \mu}{\sigma}\right) - &\Phi\left(\frac{0.5 - \epsilon - \mu}{\sigma}\right) 
    \approx \frac{1}{\sqrt{2\pi}} \left( \frac{2\epsilon}{\sigma} - \frac{4 \left(\frac{0.5 - \mu}{\sigma}\right)^2 \left(\frac{\epsilon}{\sigma}\right)}{6} \right)\\
    &= \frac{1}{\sqrt{2\pi}} \left( \frac{2\epsilon}{\sigma} - \frac{2 \epsilon \left(\frac{0.5 - \mu}{\sigma}\right)^2}{3\sigma} \right)\\
    &= \frac{1}{\sqrt{2\pi}} \left( \frac{2\epsilon}{\sigma} - \frac{2 \epsilon (0.5 - \mu)^2}{3\sigma^3} \right)\\
    &= \frac{2\epsilon}{\sqrt{2\pi}\sigma} \left( 1 - \frac{(0.5 - \mu)^2}{3\sigma^2} \right).
\end{align*}
So, the cumulative probability can be expressed as a polynomial approximation in terms of $\epsilon$:
\begin{equation*}
    P(0.5 - \epsilon \leq X \leq 0.5 + \epsilon) \approx \frac{2\epsilon}{\sqrt{2\pi}\sigma} \left( 1 - \frac{(0.5 - \mu)^2}{3\sigma^2} \right),
\end{equation*}
Where $\epsilon = \frac{1}{N}|h_k(x) - h_k'(x)|$ and constant $\mu = \frac{\alpha}{\alpha+\beta}$, $\sigma = \sqrt{\frac{\alpha\beta}{(\alpha+\beta)^2(\alpha+\beta+1)}}$.
Substituting this bound into Equation \ref{eq8} and \ref{eq9}, we have:
\begin{equation*}
    |\mathcal{U}(f) - \mathcal{U}(f')| \leq \frac{L}{N}|h_k(x) - h_k'(x)|,
\end{equation*}
where $L = \frac{2}{\sqrt{2\pi}\sigma} \left( 1 - \frac{(0.5 - \mu)^2}{3\sigma^2} \right)$ and $N$ is the number of sub-classifiers.

\subsection{Proof of Theorem \ref{t1}}\label{t1proof}
To prove Theorem 1, we must first establish a lemma:
\begin{lemma}\label{l1}
    For any arbitrary integer $k\in[0,n-2]$,
    \begin{equation*}
        \frac{1}{n}(\frac{1}{\binom{n-1}{k}}+\frac{1}{\binom{n-1}{k+1}})=\frac{1}{(n-1)\binom{n-2}{k}}.
    \end{equation*}
\end{lemma}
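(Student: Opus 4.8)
The plan is to prove Lemma \ref{l1} by direct algebraic manipulation of the binomial coefficients, since the claimed identity is a purely combinatorial fact with no dependence on the Shapley-value machinery. I would start from the left-hand side and expand each binomial coefficient using the factorial definition $\binom{n-1}{k} = \frac{(n-1)!}{k!\,(n-1-k)!}$ and $\binom{n-1}{k+1} = \frac{(n-1)!}{(k+1)!\,(n-2-k)!}$, then combine the two reciprocals over a common denominator.

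Concretely, the key steps are as follows. First I would rewrite $\frac{1}{\binom{n-1}{k}} + \frac{1}{\binom{n-1}{k+1}}$ by substituting the factorial expressions, giving a sum of the form $\frac{k!\,(n-1-k)! + (k+1)!\,(n-2-k)!}{(n-1)!}$. Second, I would factor the numerator: pulling out the common factor $k!\,(n-2-k)!$ from both terms yields $k!\,(n-2-k)!\,\bigl[(n-1-k) + (k+1)\bigr] = k!\,(n-2-k)!\cdot n$. Third, substituting back and multiplying by the prefactor $\frac{1}{n}$ cancels the $n$, leaving $\frac{k!\,(n-2-k)!}{(n-1)!}$. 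Fourth, I would recognize that $\frac{(n-1)!}{k!\,(n-2-k)!} = (n-1)\binom{n-2}{k}$, which follows from $(n-1)! = (n-1)\cdot(n-2)!$ and $\binom{n-2}{k} = \frac{(n-2)!}{k!\,(n-2-k)!}$. Taking the reciprocal then gives exactly $\frac{1}{(n-1)\binom{n-2}{k}}$, completing the identity.

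There is essentially no hard obstacle here, since the lemma is a routine binomial identity; the only point requiring mild care is verifying that the factorial expressions are well-defined over the stated range $k \in [0, n-2]$, so that $n-2-k \geq 0$ and all factorials make sense, and that the common-factor extraction in the numerator is valid at the endpoints. The mild bookkeeping of keeping the factorial exponents aligned during the common-denominator step is the most error-prone part, but it is mechanical rather than conceptual. Once the lemma is in hand, it will be used in the proof of Theorem \ref{t1} to telescope or regroup the Shapley-value weights when comparing $\mathcal{SV}_i$ and $\mathcal{SV}_j$, so the natural final remark is to note how this pairing of consecutive coalition sizes is precisely what the Shapley summation in Equation \ref{equ:SV} requires.
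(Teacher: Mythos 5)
Your proposal is correct and follows essentially the same route as the paper's proof: expand the reciprocals of the binomial coefficients via factorials, combine over the common denominator, factor out $k!\,(n-2-k)!$ so that $(n-1-k)+(k+1)=n$ cancels the $\frac{1}{n}$ prefactor, and identify the remainder as $\frac{1}{(n-1)\binom{n-2}{k}}$. No gaps; the steps match the paper's algebra line for line.
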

\begin{proof}
    \begin{align*}
        \frac{1}{n}(\frac{1}{\binom{n-1}{k}}+\frac{1}{\binom{n-1}{k+1}})& =\frac{k!(n-1-k)!+(k+1)!(n-2-k)!}{n!}  \\
        &=\frac{k!(n-2-k)!(k+1+n-1-k)}{n!} \\
        &=\frac{k!(n-2-k)!}{(n-1)\cdot(n-2)!}\\
        &=\frac{1}{(n-1)\binom{n-2}{k}}.
    \end{align*}
\end{proof}
Then we prove Theorem 1: 
\begin{proof}
\normalfont
\begin{align*}
    &|\mathcal{SV}_i - \mathcal{SV}_j|\\
    =&\frac{1}{n}\left|\sum_{\mathcal{S} \subseteq \mathcal{N} \setminus \{\bm{e}_i\}}{\frac{\mathcal{U} (\mathcal{S} \cup \{ \bm{e}_i\} - \mathcal{U}(\mathcal{S}))}{\binom{n-1}{|\mathcal{S}|}}} - \sum_{\mathcal{S} \subseteq \mathcal{N} \setminus \{\bm{e}_j\}}{\frac{\mathcal{U} (\mathcal{S} \cup \{ \bm{e}_i\} - \mathcal{U}(\mathcal{S}))}{\binom{n-1}{|\mathcal{S}|}}}\right|\\
    =&\frac{1}{n} \left|\sum_{\mathcal{S} \subseteq \mathcal{N} \setminus \{\bm{e}_i, \bm{e}_j\}}{\frac{\mathcal{U}(\mathcal{S} \cup \{\bm{e}_i\} - \mathcal{U}(\mathcal{S} \cup \{\bm{e}_j\})}{\binom{n-1}{|\mathcal{S}|}} + \frac{\mathcal{U}(\mathcal{S} \cup \{\bm{e}_i\} - \mathcal{U}(\mathcal{S} \cup \{\bm{e}_j\}}{\binom{n-1}{|\mathcal{S}|+1}}}\right|\\
    =&\frac{1}{n-1} \left|\sum_{\mathcal{S} \subseteq \mathcal{N} \setminus \{\bm{e}_i, \bm{e}_j\}}{\frac{\mathcal{U}(\mathcal{S} \cup \{\bm{e}_i\} - \mathcal{U}(\mathcal{S} \cup \{\bm{e}_j\})}{\binom{n-2}{|\mathcal{S}|}}}\right|(\textbf{Using Lemma} \ref{l1})\\
    \leq&\frac{1}{n-1} \sum_{\mathcal{S} \subseteq \mathcal{N} \setminus \{\bm{e}_i, \bm{e}_j\}}{\frac{|\mathcal{U}(\mathcal{S} \cup \{\bm{e}_i\} - \mathcal{U}(\mathcal{S} \cup \{\bm{e}_j\})|}{\binom{n-2}{|\mathcal{S}|}}}\\
    \leq& \frac{1}{n-1} \sum_{\mathcal{S} \subseteq \mathcal{N} \setminus \{\bm{e}_i, \bm{e}_j\}}{\frac{L \cdot ||\bm{e}_i - \bm{e}_j||}{\binom{n-2}{|\mathcal{S}|}}}\\
    \leq& L \cdot ||\bm{e}_i - \bm{e}_j||\frac{1}{n-1} \sum_{\mathcal{S} \subseteq \mathcal{N} \setminus \{\bm{e}_i, \bm{e}_j\}}{\frac{1}{\binom{n-2}{|\mathcal{S}|}}}\\
\end{align*}
For the coefficient:
\begin{align*}   
    \frac{1}{n-1} \sum_{\mathcal{S} \subseteq \mathcal{N} \setminus \{\bm{e}_i, \bm{e}_j\}}{\frac{1}{\binom{n-2}{|\mathcal{S}|}}}
    &= \frac{1}{n-1} \sum_{k=0}^{n-2}{\binom{n-2}{k}}\frac{1}{\binom{n-2}{k}}\\
    &= \frac{1}{n-1} \cdot (n-1)\\
    &=1.
\end{align*}
So we have:
\begin{equation*}
    |\mathcal{SV}_i - \mathcal{SV}_j| \leq L \cdot ||\bm{e}_i - \bm{e}_j||.
\end{equation*}
\end{proof}

\section{Additional plots}
% First, we calculated Shapley values of prompts on the Bigbench Date (commonsense reasoning)~\cite{bigbench} and conduct the sorting and adding experiments. Then, we test the effect of the Shapley value method on standard prompts that do not include rationale. The accuracy of is. Finally, we plot figures of results using the prompts from Auto-Cot.

% \begin{figure}[t]
%     \centering
%     \includegraphics[width=0.3\linewidth]{}
%     \caption{Result for Bigbench Date using Manual-CoT.}
% \end{figure}

\begin{figure}[h]
  \centering
  \begin{subfigure}{0.4\textwidth}
    \includegraphics[width=\textwidth]{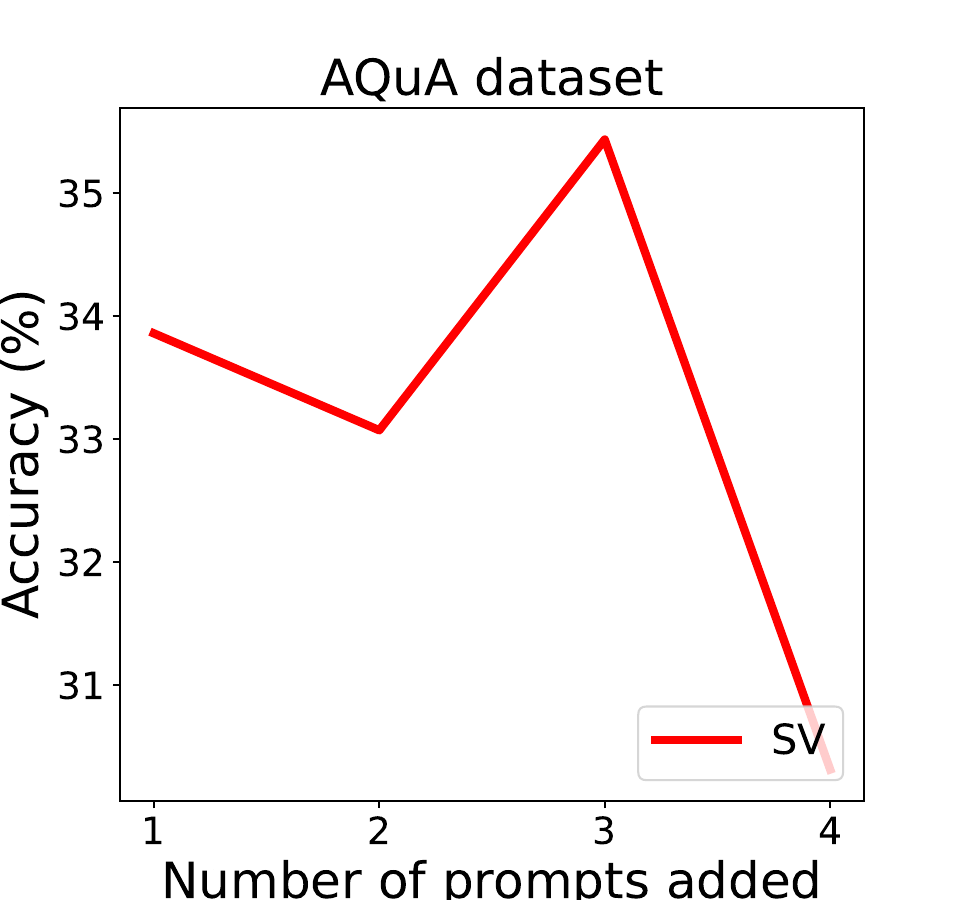}
    % \caption{Image 1}
    % \label{fig:image1}
  \end{subfigure}
  \hfill
  \begin{subfigure}{0.4\textwidth}
    \includegraphics[width=\textwidth]{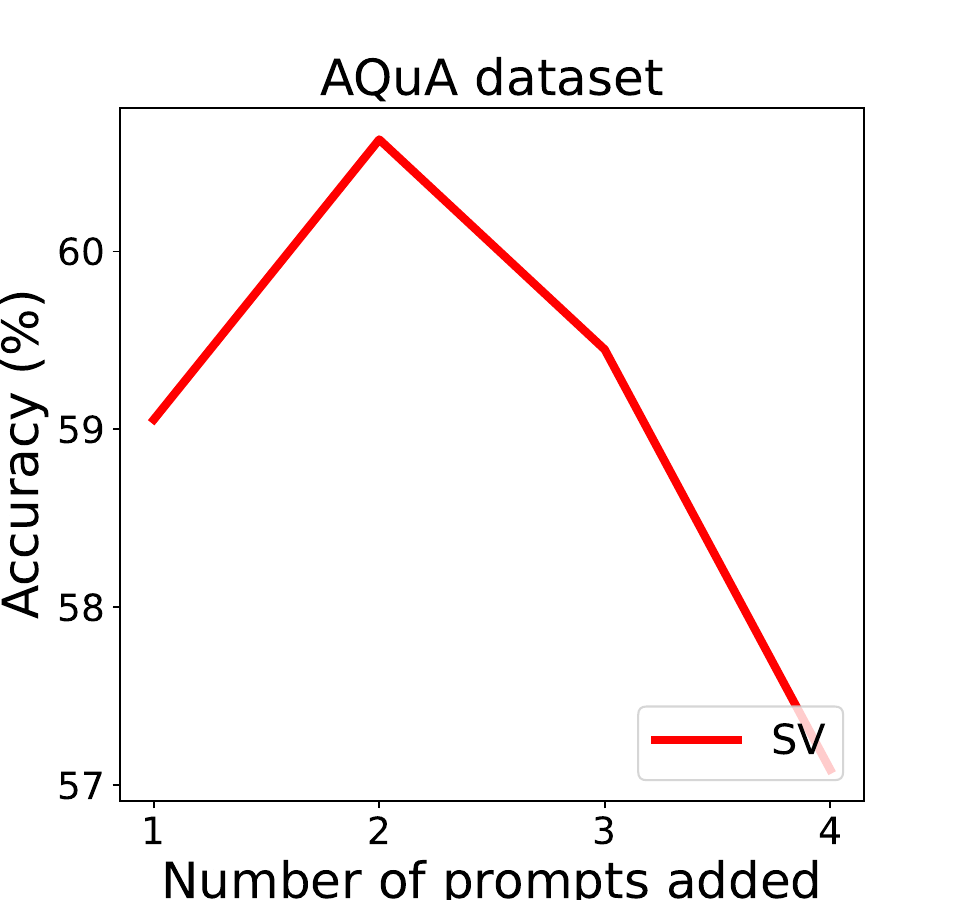}
    % \caption{Image 2}
    % \label{fig:image2}
  \end{subfigure}
  \caption{Results for AQuA using Manual prompt (Manual-CoT but without rationale) and Auto-CoT.}
  % \label{fig:multimage}
\end{figure}

\begin{figure}[h]
  \centering
  \begin{subfigure}{0.4\textwidth}
    \includegraphics[width=\textwidth]{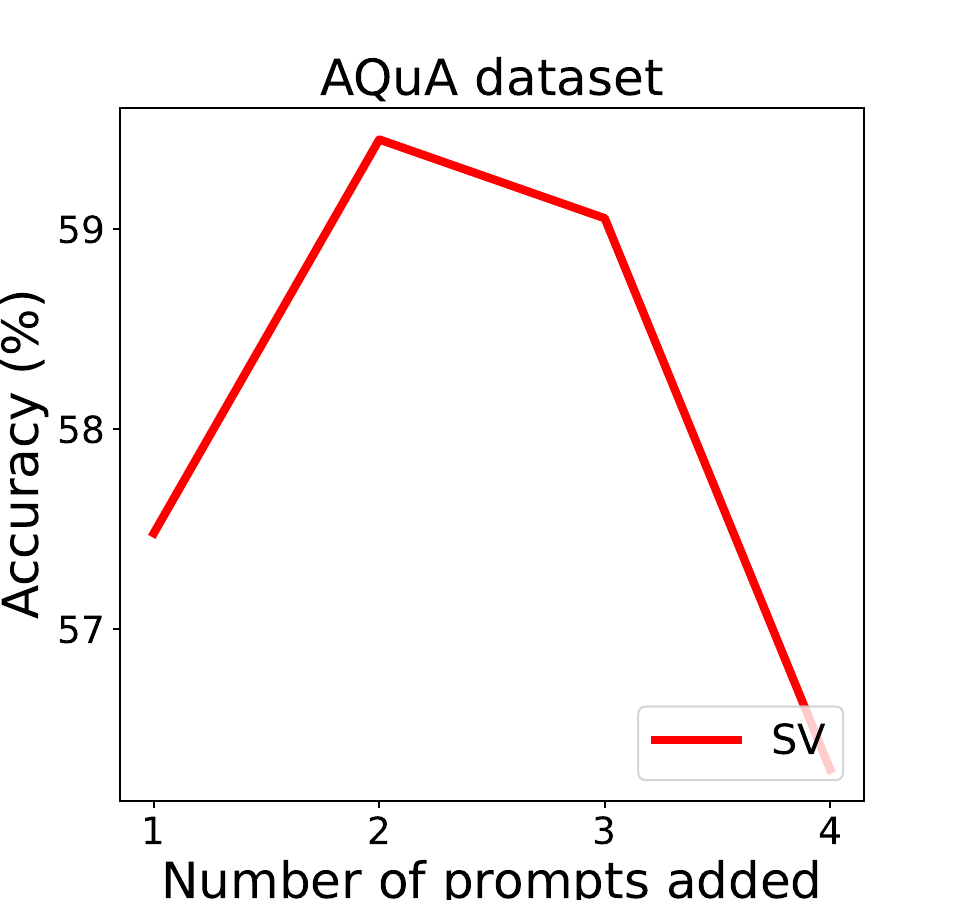}
    % \caption{Image 1}
    % \label{fig:image1}
  \end{subfigure}
  \hfill
  \begin{subfigure}{0.4\textwidth}
    \includegraphics[width=\textwidth]{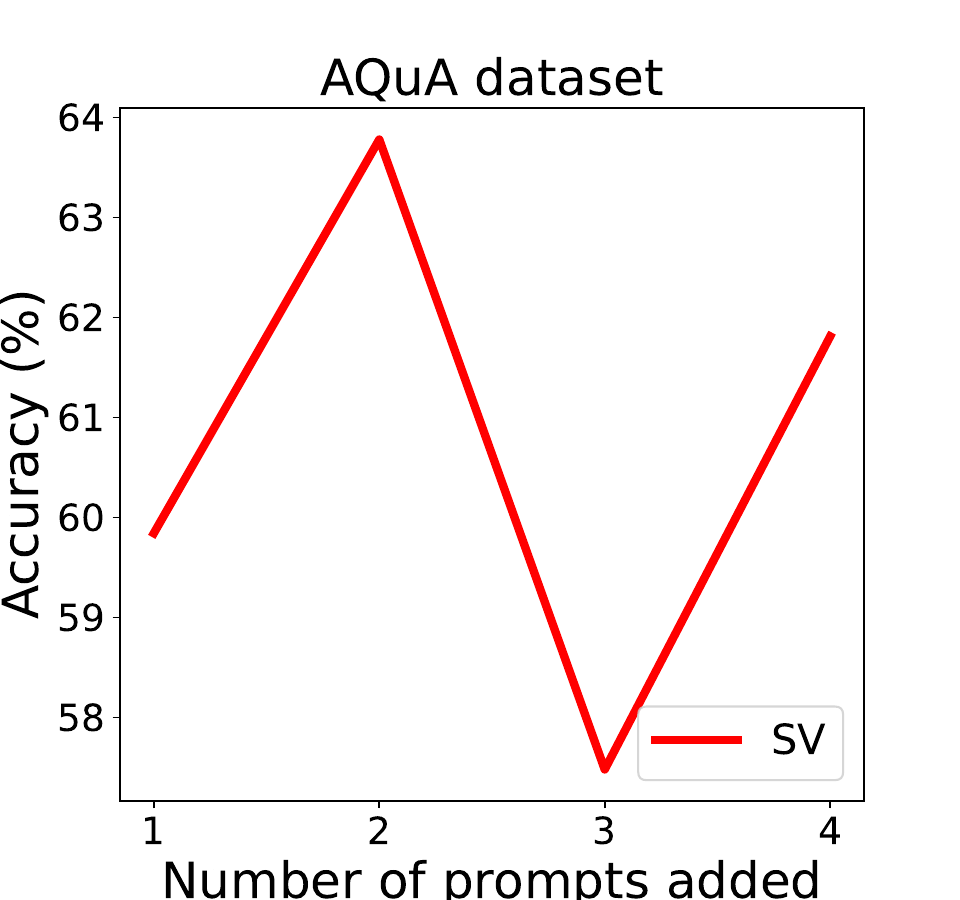}
    % \caption{Image 2}
    % \label{fig:image2}
  \end{subfigure}
  \caption{Results for AQuA using Active-CoT and Complex-CoT.}
  % \label{fig:multimage}
\end{figure}

\begin{figure}[h]
    \centering
    \includegraphics[width=0.4\linewidth]{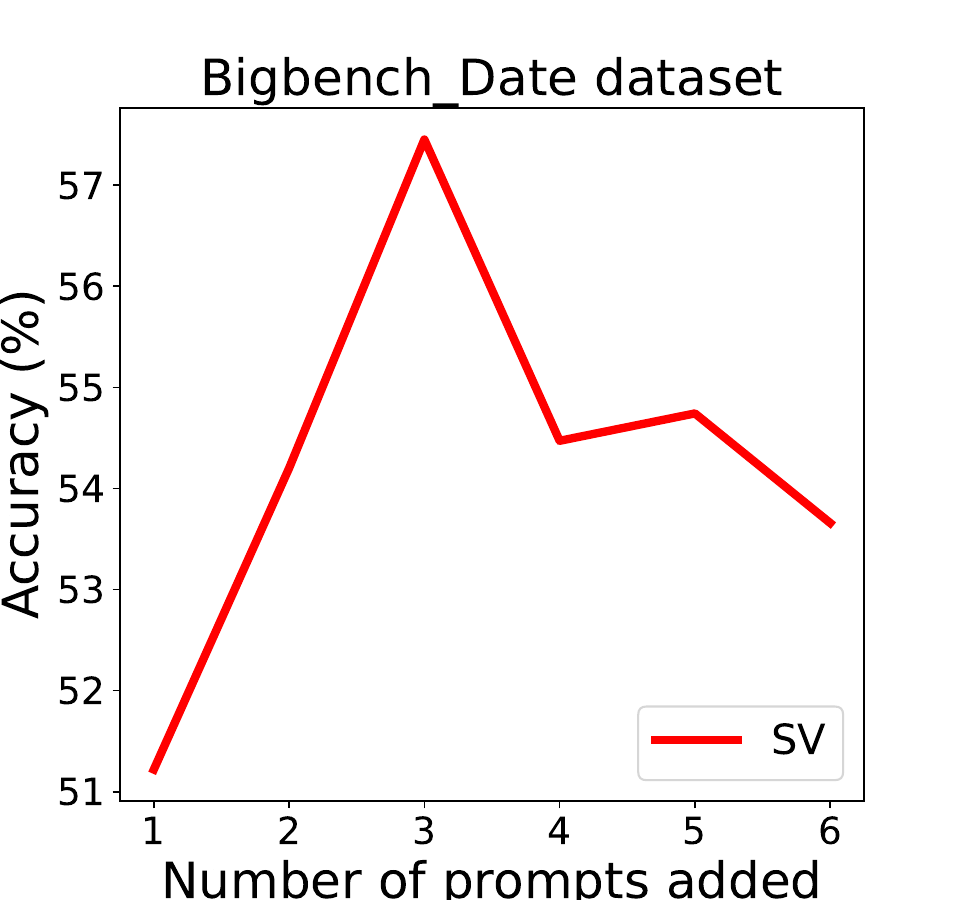}
    \caption{Result for Date using Manual prompt (Manual-CoT but without rationale).}
\end{figure}

% \begin{figure}[htbp]
%     \begin{minipage}{0.48\textwidth}
%         \centering
%     \includegraphics[width=0.9\textwidth]{fig/AQuA_gpt-3.5-turbo-0125_few-shot_1230.pdf}
%     \end{minipage}
    
%     \begin{minipage}{0.48\textwidth}
%         \centering
%         \includegraphics[width=0.9\textwidth]{fig/AQuA_gpt-3.5-turbo-0125_few-shot-cot_auto_1032.pdf}
%     \end{minipage}
    
%     \caption{Results for AQuA using using Manual prompt (Manual-CoT but without rationale) and Auto-CoT.}
% \end{figure}

% \begin{figure}[htbp]
%     \centering
%     \includegraphics[width=0.20\linewidth]{fig/AQuA_gpt-3.5-turbo-0125_few-shot-cot_active_0312.pdf}
%     \caption{Result for AQuA using Active-CoT.}
% \end{figure}

% \begin{figure}
%     \centering
%     \includegraphics[width=0.20\linewidth]{fig/AQuA_gpt-3.5-turbo-0125_few-shot-cot_complex_3201.pdf}
%     \caption{Result for AQuA using Complex-CoT.}
% \end{figure}

% \section{Additional experiments of the learning-based method}
% We also try to use SVR and KNN to fit the model, but we found it is completely unusable. Because for high-dimensional data (like the output of SBERT is 384-dimension), the distinction of distance is very low.

\newpage
\section{Resources}\label{appendix:resources}
The dataset used can be obtained through the following URL.
\begin{itemize}
    \item SST2-sentiment-analysis:
    
    \url{https://github.com/YJiangcm/SST-2-sentiment-analysis}
    
    \item AQUA-RAT:
    
    \url{https://github.com/google-deepmind/AQuA}
    
    % \item LastLetterConcat:
    
    % \url{https://huggingface.co/datasets/ChilleD/LastLetterConcat}
    
    \item Date-understanding:
    
    \url{https://github.com/google/BIG-bench}
\end{itemize}

The models and checkpoints used can be obtained through the following URL.
\begin{itemize}
    \item BERT-base:
    
    \url{https://huggingface.co/google-bert/bert-base-uncased}
    
    \item GPT-3.5-turbo:
    
    \url{https://openai.com/api/}
\end{itemize}

We used one NVIDIA RTX 3090 GPU. Note that experiments only require sufficient memory to load the BERT model, and calculating the Shapley value and training machine learning models can be done on a common CPU. 

\end{document}